
\documentclass{article}


\usepackage{amsmath,amsfonts,amsthm,amssymb,stmaryrd,xcolor}
\usepackage[bookmarks=true, colorlinks=true, linkcolor=blue!50!red, citecolor=orange,
pdfencoding=unicode]{hyperref}
\usepackage{tikz}
\usepackage{onecolceurws}
\usepackage{enumitem}
\usepackage{comment}
\usepackage{mathabx}

\input diagxy  


\makeatletter
\def\copyrightspace{
\long\def\@makefntext##1{\noindent ##1}
\footnotesep 1em
\footnotetext[0]{\em Copyright \copyright\ by the paper's authors.
Copying permitted for private and academic purposes.}
\footnotetext[0]{In: J.\ Cheney, H-S.\ Ko (eds.): Proceedings of the Eighth International Workshop on Bidirectional Transformations (Bx 2019),
Philadelphia, PA, USA, June 4, 2019, published at http://ceur-ws.org}
}
\makeatother


\newcommand{\define}[1]{\textbf{#1}}

\newcommand{\rr}{\mathbb{R}}
\newcommand{\prl}{\Vert}
\newcommand{\cp}{\mathbin{\fatsemi}}
\newcommand{\id}{\mathrm{id}}

\newcommand{\set}{\mathsf{Set}}
\newcommand{\alens}{\mathsf{Lens}}
\newcommand{\wbalens}{\mathsf{WBLens}}
\newcommand{\slens}{\mathsf{SLens}}
\newcommand{\lrn}{\mathsf{Learn}}
\newcommand{\equivf}{f}


\usetikzlibrary{
	cd,
	shapes.geometric,
	decorations.markings,
	decorations.pathmorphing,
	positioning,
	arrows,
	shapes,
	calc,
	fit,
	quotes}

\tikzset{
   oriented WD/.style={
      every to/.style={out=0,in=180,draw},
      label/.style={
         font=\everymath\expandafter{\the\everymath\scriptstyle},
         inner sep=0pt,
         node distance=2pt and -2pt},
      semithick,
      node distance=1 and 1,
      decoration={markings, mark=at position .5 with {\arrow{stealth};}},
      ar/.style={postaction={decorate}},
      execute at begin picture={\tikzset{
         x=\bbx, y=\bby,
         every fit/.style={inner xsep=\bbx, inner ysep=\bby}}}
      },
   bbx/.store in=\bbx,
   bbx = 1.5cm,
   bby/.store in=\bby,
   bby = 1.75ex,
   bb port sep/.store in=\bbportsep,
   bb port sep=2,
   bb port length/.store in=\bbportlen,
   bb port length=0pt,
   bb penetrate/.store in=\bbpenetrate,
   bb penetrate=0pt,
   bb min width/.store in=\bbminwidth,
   bb min width=1cm,
   bb rounded corners/.store in=\bbcorners,
   bb rounded corners=5pt,
   bb small/.style={bb port sep=1, bb port length=2.5pt, bbx=.4cm, bb min width=.4cm, bby=.7ex},
   bb/.code 2 args={
      \pgfmathsetlengthmacro{\bbheight}{\bbportsep * (max(#1,#2)) * \bby}
      \pgfkeysalso{draw,minimum height=\bbheight,minimum width=\bbminwidth,outer sep=0pt,
         rounded corners=\bbcorners,thick,
         prefix after command={\pgfextra{\let\fixname\tikzlastnode}},
         append after command={\pgfextra{\draw
            \ifnum #1=0{} \else foreach \i in {1,...,#1} {
               ($($(\fixname.north
	       west)+(0,.9\bbportsep)$)!{\i/(#1+1)}!($(\fixname.south
	       west)-(0,.9\bbportsep)$)$)
	       +(-\bbportlen,0) coordinate (\fixname_in\i) -- +(\bbpenetrate,0) coordinate (\fixname_in\i')}\fi 
            \ifnum #2=0{} \else foreach \i in {1,...,#2} {
               ($($(\fixname.north
	       east)+(0,\bbportsep)$)!{\i/(#2+1)}!($(\fixname.south
	       east)-(0,\bbportsep)$)$) +(-\bbpenetrate,0) coordinate (\fixname_out\i') -- +(\bbportlen,0) coordinate (\fixname_out\i)}\fi;
         }}}
   },
   bb name/.style={append after command={\pgfextra{\node[anchor=north] at
(\fixname.north) {#1};}}},
   ibb port sep/.store in=\ibbportsep,
   ibb port sep=2,
   ibb port length/.store in=\ibbportlen,
   ibb port length=4pt,
   ibb min width/.store in=\ibbminwidth,
   ibb min width=1cm,
   ibb rounded corners/.store in=\ibbcorners,
   ibb rounded corners=1pt,
   ibb small/.style={ibb port sep=1, ibb port length=2.5pt, bbx=.4cm, ibb min width=.4cm, bby=.7ex},
   ibb/.code 2 args={
	   \pgfmathsetlengthmacro{\ibbheight}{\ibbportsep * (max(#1,#2)) * \bby}
	   \pgfkeysalso{draw,color=gray!50,minimum height=\ibbheight,minimum width=\ibbminwidth,outer sep=0pt,
		   rounded corners=\ibbcorners,thick,
		   prefix after command={\pgfextra{\let\fixname\tikzlastnode}},
		   append after command={\pgfextra{\coordinate
			   \ifnum #1=0{} \else foreach \i in {1,...,#1} {
				   ($($(\fixname.north
					west)+(0,.9\ibbportsep)$)!{\i/(#1+1)}!($(\fixname.south
						west)-(0,.9\ibbportsep)$)$)
					   +(-\ibbportlen,0) coordinate (\fixname_in\i) -- +(\ibbportlen,0) coordinate (\fixname_in\i')}\fi 
					   \ifnum #2=0{} \else foreach \i in {1,...,#2} {
						   ($($(\fixname.north
							east)+(0,\ibbportsep)$)!{\i/(#2+1)}!($(\fixname.south
								east)-(0,\ibbportsep)$)$) +(-\ibbportlen,0) coordinate (\fixname_out\i') -- +(\ibbportlen,0) coordinate (\fixname_out\i)}\fi;
		   }}}
   },
   ibb name/.style={append after command={\pgfextra{\node[anchor=north] at
   (\fixname.north) {#1};}}},
   blankbb port sep/.store in=\blankbbportsep,
   blankbb port sep=2,
   blankbb min width/.store in=\blankbbminwidth,
   blankbb min width=1cm,
   blankbb rounded corners/.store in=\blankbbcorners,
   blankbb rounded corners=1pt,
   blankbb small/.style={blankbb port sep=1, blankbb port length=2.5pt, bbx=.4cm, blankbb min width=.4cm, bby=.7ex},
   blankbb/.code 2 args={
	   \pgfmathsetlengthmacro{\blankbbheight}{\blankbbportsep * (max(#1,#2)) * \bby}
	   \pgfkeysalso{draw,color=gray!50,minimum height=\blankbbheight,minimum width=\blankbbminwidth,outer sep=0pt,
		   rounded corners=\blankbbcorners,thick,
		   prefix after command={\pgfextra{\let\fixname\tikzlastnode}},
		   append after command={\pgfextra{\draw
			\ifnum #1=0{} \else foreach \i in {1,...,#1} {
			   ($($(\fixname.north
			   west)+(0,.9\ibbportsep)$)!{\i/(#1+1)}!($(\fixname.south
			   west)-(0,.9\ibbportsep)$)$)
					    coordinate (\fixname_in\i)}\fi 
			\ifnum #2=0{} \else foreach \i in {1,...,#2} {
			   ($($(\fixname.north
			   east)+(0,.9\ibbportsep)$)!{\i/(#2+1)}!($(\fixname.south
			   east)-(0,.9\ibbportsep)$)$) coordinate (\fixname_out\i)}\fi;
		   }}}
   },
   blankbb name/.style={append after command={\pgfextra{\node[anchor=north] at
     (\fixname.north) {#1};}}},
   symbb port sep/.store in=\symbbportsep,
   symbb port sep=2,
   symbb port length/.store in=\symbbportlen,
   symbb port length=0pt,
   symbb min width/.store in=\symbbminwidth,
   symbb min width=1cm,
   symbb rounded corners/.store in=\symbbcorners,
   symbb rounded corners=5pt,
   symbb small/.style={symbb port sep=1, symbb port length=2.5pt, symbbx=.4cm, symbb min width=.4cm, symbby=.7ex},
   symbb/.code 2 args={
      \pgfmathsetlengthmacro{\symbbheight}{\symbbportsep * (max(#1,#2)) * \bby}
      \pgfkeysalso{draw,minimum height=\symbbheight,minimum width=\symbbminwidth,outer sep=0pt,
         rounded corners=\symbbcorners,thick,
         prefix after command={\pgfextra{\let\fixname\tikzlastnode}},
         append after command={\pgfextra{\draw
            \ifnum #1=0{} \else foreach \i in {1,...,#1} {
               ($($(\fixname.north
	       west)+(0,.9\symbbportsep)$)!{\i/(#1+1)}!($(\fixname.south
	       west)-(0,.9\symbbportsep)$)$)
	       +(-\symbbportlen,0) coordinate (\fixname_in\i) -- +(\symbbportlen,0) coordinate (\fixname_in\i')}\fi 
            \ifnum #2=0{} \else foreach \i in {1,...,#2} {
               ($($(\fixname.north
	       east)+(0,.9\symbbportsep)$)!{\i/(#2+1)}!($(\fixname.south
	       east)-(0,.9\symbbportsep)$)$) +(-\symbbportlen,0) coordinate (\fixname_out\i') -- +(\symbbportlen,0) coordinate (\fixname_out\i)}\fi;
         }}}
   },
   symbb name/.style={append after command={\pgfextra{\node[anchor=north] at
(\fixname.north) {#1};}}},
}


\newtheorem*{theorem*}{Theorem}
\newtheorem{definition}{Definition}[section]
\newtheorem{proposition}[definition]{Proposition}   
\newtheorem{theorem}[definition]{Theorem}
\newtheorem{lemma}[definition]{Lemma}

\theoremstyle{remark}
\newtheorem{example}[definition]{Example}
\newtheorem*{example*}{Example}
\newtheorem{remark}[definition]{Remark}


\title{Lenses and Learners}
\author{Brendan Fong and Michael Johnson}
\author{
Brendan Fong \\ 
Department of Mathematics \\
Massachusetts Institute of Technology \\ 
bfo@mit.edu
\and
Michael Johnson \\  
Faculty of Science and Engineering\\
Macquarie University \\ 
michael.johnson@mq.edu.au
}

\institution{}


\begin{document}
\maketitle

\begin{abstract}
  Lenses are a well-established structure for modelling bidirectional
  transformations, such as the interactions between a database and a view of
  it. Lenses may be symmetric or asymmetric, and may be composed, forming
  the morphisms of a monoidal category. More recently, the notion of a learner
  has been proposed: these provide a compositional way of modelling supervised
  learning algorithms, and again form the morphisms of a monoidal category. In
  this paper, we show that the two concepts are tightly linked. We show
  both that there is a faithful, identity-on-objects symmetric monoidal functor
  embedding a category of \emph{asymmetric} lenses into the category of
  learners, and furthermore there is such a functor embedding the category of
  learners into a category of \emph{symmetric} lenses.
\end{abstract}
\vskip 32pt

\section{Introduction}

This paper presents surprising links between two apparently disparate areas:
a compositional treatment of supervised learning algorithms, called \emph{learners},
and and a mathematical treatment of bidirectional transformations, called
\emph{lenses}. Until this work there had been no known non-trivial relationships
between the two areas, and, naively at least, there seemed to be little
reason to expect them to be closely related mathematically.

But lenses and learners are indeed mathematically closely related. The main
result that we present here is the existence of a faithful,
identity-on-objects, symmetric monoidal functor, from the category whose arrows are learners
to a category whose arrows are \emph{symmetric lenses}. In addition, the symmetric
lenses in the image of that functor have particularly simple structure: as
spans of asymmetric lenses, their left legs are the long studied and easily
understood lenses known as \emph{constant complement lenses}. Roughly speaking,
this means that a supervised learning algorithm may be understood as a special
type of symmetric lens.

The right legs of the symmetric lenses in the image of the functor are also
simple, but in a different way. The right legs are bare asymmetric lenses ---
lenses which have, as all lenses do, a Put and Get, but which have no axioms
restricting the way the Put and Get interact. Such simple lenses were defined
in the very first papers on lenses, and were the ones blessed with the name
``lens'', but the study of such bare lenses has so far been somewhat
neglected by the lens community, and certainly by us, because of a lack of
examples or applications requiring them. This paper provides some compelling
examples and has led us to study them seriously.

The symmetric lenses that correspond to learners also have links, as will be
explained in the discussion section, with recent work on some quite
sophisticated notions of lenses called \emph{amendment lenses}, or in a slightly
simplified form, \emph{a-lenses}. These a-lenses do have extra axioms --- the ones
we have looked at most are called Stable PutGet (or SPG) a-lenses --- and it
turns out that the extra structure provided by amendment lenses can be added
in a canonical way to the lenses that we study here, and in a way that allows
then to recover at least the amendment-lens versions of GetPut and PutGet
axioms. 

\subsubsection*{Terminology and notation}
Before we begin, a few words about notation. In particular, we may need to thank readers for their indulgence in translating notations --- since this paper
brings together two areas that already have established notation, in order to make the comparisons 
with those areas easier we have generally retained the appropriate notations for lenses and for learners,
but that means that readers sometimes have to carry both notations in their minds and translate
between them.  We have endeavoured to help.

In a small number of cases where we judged it would do little harm we have taken the 
opposite approach and made minor changes to established notation.  In those cases, if the
notation comes from the readers' own area of expertise we have to ask for even more indulgence.  We
hope we do not engender confusion.

On more mathematical matters, note that we are both category theorists, but
that the lenses and learners that we are dealing with here are all set-based,
and there is very little explicit use of category theory beyond some
pullbacks, the construction of functions from other functions, and the
judicious, and mostly unremarked, use of isomorphisms to for example re-order
variables inside tuples, and to rebracket tuples of tuples. But sadly that
makes for some long strings of variables as parameters for functions. 

To manage the visual complexity of some of these complicated function
compositions we have also presented them using string diagrams in $\set$.
While we hope that, written alongside the usual elementwise function
notation, these will be easy enough to read, the reader unfamiliar with this
notation might consult introductory references \cite{Selinger,FS19}. We look
forward to a treatment of some of the things presented here as notions
internal to a category, and anticipate that they will probably bring with
them significant simplifications, especially in notation.

Finally, a remark about how ``well-behaved'' is a technical term.  Many of the lenses we consider
here are not well-behaved, but that simply means that they don't satisfy certain conditions,
conditions which for the applications considered here would be undesirable.  So it's important
to remember that being not-well-behaved may be desirable and is certainly not derogatory (contrary 
to normal English usage).

\subsubsection*{Outline}
The paper is structured as follows.  

In the next two sections we introduce lenses,
firstly in their asymmetric form (Section~\ref{sec-AsymmetricLenses}), and
then in their symmetric form (Section~\ref{sec-SymmetricLenses}). Along the
way we note how to compose both asymmetric lenses, and symmetric lenses, and
we introduce the particularly simple constant complement lenses, and show
briefly how constant complement lenses compose to give constant complement
lenses. Because of the bare lenses that we are using in this paper, we need
to extend somewhat the usual definition of composition of symmetric lenses
which is generally defined only when the lenses are well-behaved, or at least
each satisfy
the PutGet axiom.  We show that, with a slightly complicated 
construction, the usual definition of composition can be extended to symmetric lenses in 
which one leg satisfies PutGet even if the other leg satisfies no axioms at all.

In Section~\ref{sec-Learners} we introduce learners, their composition and monoidal
structure, and how they form a category.  As might be expected, to make a category
(rather than say a bicategory) we need to take equivalence classes of learners, and the
equivalence relation is introduced.  It corresponds so closely to the usual equivalence
for symmetric lenses presented as spans of asymmetric lenses that we can suppress any 
detailed treatment of the equivalences in this paper and present the results in terms of 
representatives of equivalence classes.  Some readers at first find the definition of 
composition for learners a little daunting, so we include string diagrams to illustrate 
how the composition works.

In Section~\ref{sec-MainResult} we present in detail the theorem already alluded
to, and illustrate the precise and remarkably parallel relationship between
learners and lenses.  And then in Section~\ref{sec-Discussion} we discuss some
of the observations that follow from this work, and conclude with some
speculations on possible directions for further studies flowing from these
results. 

\section{Asymmetric Lenses} \label{sec-AsymmetricLenses}

Asymmetric lenses have been studied in a variety of different categories, and with a range of 
different forms including d-lenses \cite{ddl}, c-lenses \cite{jrdl}, and most recently 
amendment lenses, also known as a-lenses \cite{dml}.  Furthermore those lenses have been 
studied as algebraic structures with a number of axioms such as the so-called
PutGet law \cite{fgmps}, the PutPut law \cite{fgmps,jrlpp}, and many others.  In this paper we predominantly
restrict our attention to the very simplest cases  of set based lenses, 
as originally presented in \cite{pslvut}, with no further axioms.

\begin{definition}
  An \define{asymmetric lens} $(p,g)\colon A \to B$ is a pair of functions: a
  \define{Put} $p\colon B \times A \to A$ and a \define{Get} $g\colon A
  \to B$.
\end{definition}

These basic asymmetric lenses are the main thing that we need in order to see the relationships with 
learners, but we will sometimes need to refer to lenses that are ``well-behaved'', or at least that 
satisfy one of the following two conditions required for so-called well-behaved lenses.

\begin{definition}
  An asymmetric lens $(p,g)\colon A \to B$ is
  called \define{well-behaved} if it satisfies the following two conditions 
  \begin{description}
    \item[(PutGet)]  The Get of a Put is the projection. That is, $g(p(a,b)) = b$, or in string diagrams
          \[
            \begin{aligned}
              \begin{tikzpicture}[oriented WD]
                \node[bb port sep=1, bb={2}{1}]                            (I)     {$p$};
                \node[bb port sep=1, bb={1}{1}, right=.5 of I](J)     {$g$};
                \node[ibb={2}{1}, fit=(I) (J)]                          (outer) {};
                \node at ($(outer_in1')-(0.3,0)$) {\footnotesize $B$};
                \node at ($(outer_in2')-(0.3,0)$) {\footnotesize $A$};
                \node at ($(outer_out1')+(0.3,0)$) {\footnotesize $B$};
                \draw (outer_in1) to (I_in1);
                \draw (outer_in2) to (I_in2);
                \draw (I_out1) to (J_in1);
                \draw (J_out1) to (outer_out1);
              \end{tikzpicture}
            \end{aligned}
            =
            \begin{aligned}
              \begin{tikzpicture}[oriented WD]
                \node                           (I)     {};
                \node[ibb={2}{1}, fit=(I)]                          (outer) {};
                \node[circle, inner sep=1.5, fill] (n) at ($(outer_in2)+(.5,0)$) {};
                \node at ($(outer_in1')-(0.3,0)$) {\footnotesize $B$};
                \node at ($(outer_in2')-(0.3,0)$) {\footnotesize $A$};
                \node at ($(outer_out1')+(0.3,0)$) {\footnotesize $B$};
                \draw (outer_in2) to (n);
                \draw (outer_in1) to (outer_out1);
              \end{tikzpicture}
            \end{aligned}
          \]

    \item[(GetPut)]  The Put of an unchanged Get result is unchanged. That is, $p(g(a),a) = a$, or in string diagrams
          \[
            \begin{aligned}
              \begin{tikzpicture}[oriented WD]
                \node[bb port sep=1, bb={2}{1}] (J)     {$p$};
                \node[bb port sep=1, bb={1}{1}, left=.4 of J_in1]  (I)     {$g$};
                \coordinate (n) at (I_in1|-J_in2);
                \coordinate (p) at ($(n)-(0,1)$);
                \node[ibb={1}{1}, fit=(J) (I) (p)]           (outer) {};
                \node at ($(outer_in1')-(0.3,0)$) {\footnotesize $A$};
                \node at ($(outer_out1')+(0.3,0)$) {\footnotesize $A$};
                \draw (outer_in1) to (I_in1);
                \draw (outer_in1) to (n);
                \draw (n) to (J_in2);
                \draw (I_out1) to (J_in1);
                \draw (J_out1) to (outer_out1);
              \end{tikzpicture}
            \end{aligned}
            =
            \begin{aligned}
              \begin{tikzpicture}[oriented WD]
                \node                           (I)     {};
                \node[ibb={1}{1}, fit=(I)]                          (outer) {};
                \node at ($(outer_in1')-(0.3,0)$) {\footnotesize $A$};
                \node at ($(outer_out1')+(0.3,0)$) {\footnotesize $A$};
                \draw (outer_in1) to (outer_out1);
              \end{tikzpicture}
            \end{aligned}
          \]
  (Here the splitting wire represents the diagonal map $A \to A \times
  A$, i.e.\ $a \mapsto (a,a)$.)

  \end{description}
\end{definition}

Even without meeting the definition of well-behaved, lenses compose in a straightforward
way, and well-behaved lenses do compose to give well-behaved lenses too.  In fact, each
of the two well-behaved conditions (PutGet and GetPut) is respected by composition separately, 
so we can, when we need to, talk about the composition of lenses that satisfy merely PutGet say, 
and know that the result will also satisfy PutGet.

\begin{definition}\label{def-AsymLensComposition}
  The \define{composite lens} $(p,g) \cp (p',g')\colon A \to C$ constructed from lenses
  $(p,g)\colon A \to B$ and $(p',g')\colon B \to C$
  has as Get simply the composite $g'g$ of the Gets
  \[
    \begin{aligned}
      \begin{tikzpicture}[oriented WD]
        \node[bb port sep=1, bb={1}{1}](G1)     {$g$};
        \node[bb port sep=1, bb={1}{1}, right=.5 of G1](G2)     {$g'$};
        \node[ibb={1}{1}, fit=(G1) (G2)]                          (outer) {};
        \node at ($(outer_in1')-(0.3,0)$) {\footnotesize $A$};
        \node at ($(outer_out1')+(0.3,0)$) {\footnotesize $C$};
        \draw (outer_in1) to (G1_in1);
        \draw (G1_out1) to (G2_in1);
        \draw (G2_out1) to (outer_out1);
      \end{tikzpicture}
    \end{aligned}
  \]
  and as Put, $q\colon C \times A \to A$ given by $q(c,a) = p(p'(g(a),c),a)$.
  \[
    \begin{aligned}
      \begin{tikzpicture}[oriented WD]
        \node[bb port sep=1, bb={2}{1}] (P1)     {$p$};
        \node[bb port sep=1, bb={2}{1}, left=.4 of P1_in1] (P2)  {$p'$};
        \node[bb port sep=1, bb={1}{1}, left=.4 of P2_in2]  (G)  {$g$};
        \coordinate (n) at ($(G_in1)-(0,2)$);
        \node[ibb={2}{1}, fit=(P1) (G) (P2) (n)]           (outer) {};
        \node at ($(outer_in1')-(0.3,0)$) {\footnotesize $C$};
        \node at ($(outer_in2')-(0.3,0)$) {\footnotesize $A$};
        \node at ($(outer_out1')+(0.3,0)$) {\footnotesize $A$};
        \draw (outer_in1) to (P2_in1);
        \draw (outer_in2) to (G_in1);
        \draw (G_out1) to (P2_in2);
        \draw (outer_in2) to (n);
        \draw (n) to (P1_in2);
        \draw (P2_out1) to (P1_in1);
        \draw (P1_out1) to (outer_out1);
      \end{tikzpicture}
    \end{aligned}
  \]
  Thus in the notation just introduced, $(p,g) \cp (p',g') = (q,g'g)$.
\end{definition}

\begin{definition}
With the composition just defined, asymmetric lenses are the arrows of a 
symmetric monoidal category whose objects are sets, and whose 
monoidal product is given by cartesian product.  That category is denoted $\alens$.
Furthermore, there is a subcategory of $\alens$ called $\wbalens$ whose arrows
are well-behaved asymmetric lenses, as well as subcategories whose arrows satisfy
merely PutGet or GetPut.
\end{definition}

Given a cartesian product $B \times A$, we write the projection notation
$\pi_2\colon B \times A \to A$ for the function mapping $(b,a)$ to $a$; that
is, for projection onto the second factor. More generally, we overload the notation
$\pi_i$ using it for any projection onto an $i$th factor, with it being
disambiguated once the domain, expressed as a cartesian product, is known.

The identity lens $(\pi_1,\id_A)\colon A \to A$ on a set $A$ has identity
function as Get and projection onto the first factor as Put.

\vskip 0.5cm  
\begin{example}\label{eg-ConstComp}
One of the most basic forms of asymmetric lenses, introduced many years ago
in the database community \cite{bsusrv}, is the \emph{constant complement view updating lens}.
These are asymmetric lenses of the form $(k,\pi_1) \colon A_1 \times A_2 \to A_1$, where
$k(a_1',(a_1,a_2) ) = (a_1',a_2)$ --- the reader can see the source of the
name ``constant complement'' in the presence of $a_2$ in both the input and
the output of $k$. It is easy to see that $(k,\pi_1)$ is well-behaved.

The composite of constant complement lenses is again constant
complement. Indeed, suppose further that $A_1 = B_1 \times B_2$, and that
$(k', \pi_1)\colon B_1 \times B_2 \to B_1$ is a constant complement lens 
(so $k'(b_1',(b_1,b_2)) = (b_1',b_2)$).  Then the composite lens 
$(k, \pi_1) \cp (k', \pi_1) \colon B_1 \times B_2 \times A_2 \to B_1$ has Put 
$k''\colon  B_1 \times (B_1 \times B_2 \times A_2) \to B_1 \times B_2 \times A_2$ given by
\begin{align*}
k''\big(b'_1,(b_1,b_2,a_2)\big) 
&= k\big(k'(b'_1,(b_1,b_2)),(b_1,b_2,a_2)\big) \\
&= k\big((b_1',b_2),(b_1,b_2,a_2)\big) \\
&= (b'_1,b_2,a_2),
\end{align*}
and Get given by $\pi_1\colon B_1 \times (B_2 \times A_2) \to B_1$.
This is, up to isomorphism, the constant complement lens 
$(k'',\pi_1) \colon B_1 \times B_2 \times A_2 \to B_1$.
\end{example}
\vskip 0.5cm

Such simple composites of constant complement lenses, or indeed of other more
complicated lenses, arise very frequently in practice (for example in the
database world whenever one deals with views of views). In common with the
overloaded notation for projections $\pi_i$ which are the Gets of constant
complement lenses, it is convenient to introduce overloaded notation for the
Puts: When there is little risk of confusion we will simply write $k$ for the
constant complement Put corresponding to a given Get $\pi_i$.

We will return to constant complement lenses when we use them as the ``left leg'' 
of certain symmetric lenses to obtain our main result in Section~\ref{sec-MainResult}.

\section{Symmetric Lenses}\label{sec-SymmetricLenses}

Asymmetric lenses model well situations where one system, denoted $A$ in the
previous section, includes all the relevant information, and another system,
$B$, has information simply derived from $A$. Of course in practice it's
important to deal with the more symmetric situation where two systems share
some common structures, but each has information that the other doesn't have.
To address this, not long after the seminal work on asymmetric lenses,
Hofmann \emph{et al.} developed a symmetrized version of lenses \cite{hpwsl}
which can be described conveniently as (approximately) spans of asymmetric
lenses.

Recall that a \define{span} in a category is a pair of arrows with common
domain, $A_1 \longleftarrow S \longrightarrow A_2$. Despite their evident
symmetry, spans are usually considered to be oriented from one of the
\define{feet} to the other, so the span just drawn would be called a span
from $A_1$ to $A_2$, and the same two arrows also form a span from $A_2$ to
$A_1$, usually drawn as $A_2 \longleftarrow S \longrightarrow A_1$. The
object $S$ is called the \define{head}, (or sometimes \define{peak} or
\define{apex}) of the span. When we need to name the arrows, for example if
they are asymmetric lenses $(p_1,g_1)\colon S \longrightarrow A_1$ and
$(p_2,g_2)\colon S \longrightarrow A_2$, it is convenient to notate the span
as $ A_1 \xleftarrow{(p_1,g_1)} S \xrightarrow{(p_2,g_2)} A_2$. The lens
$(p_1,g_1)$ will be called the \define{left leg} of the span, and the lens
$(p_2,g_2)$ will similarly be called the \define{right leg} of the span.

The reader may choose to think of a symmetric lens, shortly to be introduced, as a
span of asymmetric lenses.  But in fact, in common with other descriptions of
similar structures, symmetric lenses are really equivalence classes of spans of
asymmetric lenses.  In this paper it will be convenient to talk about, for example,
``the symmetric lens $A_1 \xleftarrow{(p_1,g_1)} S \xrightarrow{(p_2,g_2)} A_2$'',
when strictly speaking that span is just a representative of an equivalence class
of similar spans, and that equivalence class is the symmetric lens.

For completeness we include here the description of the equivalence relation.

\begin{definition} \label{def-equivslens}
Suppose given two spans of asymmetric lenses
$A_1 \xleftarrow{(p_1,g_1)} S \xrightarrow{(p_2,g_2)} A_2$ and
$A_1 \xleftarrow{(p'_1,g'_1)} S' \xrightarrow{(p'_2,g'_2)} A_2$
with common feet $A_1$ and $A_2$. A function $\equivf\colon S \longrightarrow
S'$ is said to satisfy \define{conditions (E)} \cite{jrjot} when
\begin{enumerate}[label=(\roman*)]
\item $\equivf$ is surjective.
\item $\equivf$ preserves Gets: $g'_1 \equivf = g_1$ and $g'_2 \equivf = g_2$.
\item $\equivf$ preserves Puts: for all $a_1$, $s$ we have
  $p'_1(a_1,\equivf(s)) = \equivf(p_1(a_1,s))$ and $p'_2(a_2,\equivf(s)) =
  \equivf(p_2(a_2,s))$.
\end{enumerate}
\end{definition}
 
Let $\equiv_{sp}$ be the equivalence relation on spans of asymmetric lenses
generated by those functions $\equivf$ between their heads that satisfy conditions (E).

\begin{definition}
A \define{symmetric lens} from $A_1$ to $A_2$ is a $\equiv_{sp}$-class of 
spans of asymmetric lenses from $A_1$ to $A_2$.
\end{definition}

The composition of symmetric lenses is usually defined for (equivalence
classes of) spans of well-behaved asymmetric lenses \cite{jrjot}, but we aim to be
more general here. In particular, we will show that the composition rule for
well-behaved symmetric lenses generalises to a composition rule for
\emph{symmetric lenses with left legs satisfying PutGet}.

Note first that if a symmetric lens has as a representative a span of asymmetric
lenses in which one leg satisfies PutGet then all the representatives of that equivalence
class have corresponding leg satisfying PutGet.

\begin{definition}\label{def-SymmetricLensComposition}
Suppose that  
\[
  A_1 \xleftarrow{(q_1,h_1)} S_1 \xrightarrow{(p_2,g_2)} A_2 \qquad \mbox{and} \qquad
  A_2 \xleftarrow{(q_2,h_2)} S_2 \xrightarrow{(p_3,g_3)} A_3
\]
are spans of asymmetric lenses whose left legs satisfy PutGet. We define
their \define{composite symmetric lens}, from $A_1$ to $A_3$, as follows.

Let $S_1 \xleftarrow{\overline{h_2}} T \xrightarrow{\overline{g_2}} S_2$ be
the pullback in $\set$ of the cospan $S_1 \xrightarrow{g_2} A_2
\xleftarrow{h_2} S_2$. More concretely, without loss of generality, we may
suppose that 
\[
  T = \{(s_1,s_2) \in S_1 \times S_2 \,\mid\, g_2(s_1) = h_2(s_2)\}, 
\] 
and that $\overline{h_2}$ and $\overline{g_2}$ are the
projections onto the first and second components respectively. 
Next, we equip $\overline{h_2}$ with the Put $\overline{q_2}\colon S_1 \times T \longrightarrow T$ defined by $\overline{q_2}\big(s'_1,(s_1,s_2)\big) =
\big(s'_1,\, q_2(g_2(s'_1),s_2)\big)$, and equip $\overline{g_2}$ with the
Put $\overline{p_2}\colon S_2 \times T \longrightarrow T$ defined by
\[
  \overline{p_2}\big(s'_2,(s_1,s_2)\big) = \big(p_2(h_2(s'_2),s_1),\,
q_2(g_2(p_2(h_2(s'_2),s_1)),s'_2)\big)  \in T \subseteq S_1\times S_2.
\] 
A representative for the composite symmetric lens $A_1 \to A_3$ is then given by
\[
A_1 \xleftarrow{(\overline{q_2},\overline{h_2}) \cp (q_1,h_1)} T 
\xrightarrow{(\overline{p_2},\overline{g_2}) \cp (p_3,g_3)} A_2.
\]
\end{definition}

To help parse these expressions, we draw string diagrams for $\overline{q_2}$ 
\[
  \begin{aligned}
    \begin{tikzpicture}[oriented WD]
      \node[bb port sep=1, bb={2}{1}] (Q)     {$q_2$};
      \node[bb port sep=1, bb={1}{1}, above left=-.25 and .4 of Q_in1]  (G)     {$g_2$};
      \node[ibb={3}{2}, fit=(G) (Q)]           (outer) {};
      \node[circle, inner sep=1.5, fill] (n) at ($(outer_in2)+(.5,0)$) {};
      \node at ($(outer_in1')-(0.3,0)$) {\footnotesize $S_1$};
      \node at ($(outer_in2')-(0.3,0)$) {\footnotesize $S_1$};
      \node at ($(outer_in3'|-Q_in2)-(0.3,0)$) {\footnotesize $S_2$};
      \node at ($(outer_out1'|-outer_in1)+(0.3,0)$) {\footnotesize $S_1$};
      \node at ($(outer_out2'|-Q_out1)+(0.3,0)$) {\footnotesize $S_2$};
      \draw (outer_in1) to (outer_out1|-outer_in1);
      \draw (outer_in1) to (G_in1);
      \draw (outer_in2) to (n);
      \draw (outer_in3|-Q_in2) to (Q_in2);
      \draw (G_out1) to (Q_in1);
      \draw (Q_out1) to (outer_out2|-Q_out1);
    \end{tikzpicture}
  \end{aligned}
\]
and $\overline{p_2}$
\[
  \begin{aligned}
    \begin{tikzpicture}[oriented WD]
      \node[bb port sep=1, bb={2}{1}] (Q)     {$q_2$};
      \node[bb port sep=1, bb={1}{1}, left=.4 of Q_in1]  (G)     {$g_2$};
      \node[bb port sep=1, bb={2}{1}, above left=-.5 and .5 of G] (P)     {$p_2$};
      \node[bb port sep=1, bb={1}{1}, left=.4 of P_in1]  (H)     {$h_2$};
      \node[ibb={3}{2}, fit=(G) (Q) (P) (H)]           (outer) {};
      \node at ($(outer_in1'|-H_in1)-(0.3,0)$) {\footnotesize $S_2$};
      \node at ($(outer_in2')-(0.3,0)$) {\footnotesize $S_1$};
      \node at ($(outer_in3')-(0.3,0)$) {\footnotesize $S_2$};
      \node at ($(outer_out1'|-P_out1)+(0.3,0)$) {\footnotesize $S_1$};
      \node at ($(outer_out2'|-Q_out1)+(0.3,0)$) {\footnotesize $S_2$};
      \draw (outer_in1|-H_in1) to (H_in1);
      \draw (outer_in2) to (P_in2);
      \draw (outer_in3) to (Q_in2);
      \draw (H_out1) to (P_in1);
      \draw (P_out1) to (outer_out1|-P_out1);
      \draw (P_out1) to (G_in1);
      \draw (G_out1) to (Q_in1);
      \draw (Q_out1) to (outer_out2|-Q_out1);
    \end{tikzpicture}
  \end{aligned}
\]
It is straightforward to check that the images of the functions
$\overline{q_2}$ and $\overline{p_2}$ do indeed lie in their codomain $T$,
and hence that they are well-defined. Note that the construction above gives a
diagram of asymmetric lenses
\[
\bfig
\Atriangle(400,250)/->`->`/<400,250>[T`S_1`S_2;(\bar{q_2},\bar{h_2})`(\bar{p_2},\bar{g_2})` ]
\Vtriangle(400,0)/`->`->/<400,250>[S_1`S_2`A_2; `(p_2,g_2)`(q_2,h_2)]
\morphism(400,250)|a|<-400,-250>[S_1`A_1;(q_1,h_1)]
\morphism(1200,250)|a|<400,-250>[S_2`A_3;(p_3,g_3)]
\efig
\]
The composite symmetric lens is just given by composing the pairs of arrows
on the left and right of the diagram. It is straightforward to verify that $(\overline{q_2},\overline{h_2})$
satisfies PutGet, and hence that the left leg of the composite also satisfies
PutGet.

\begin{remark}
  The somewhat complicated second component $q_2(g_2(p_2(h_2(s'_2),s_1),s'_2))$ of $\overline{p_2}$ arises
  because we do not assume that $(p_2,g_2)$ satisfies PutGet. If we do assume
  $(p_2,g_2)$ obeys PutGet, then the expression simplifies to the more familiar
  $q_2(h_2(s'_2),s'_2)$. This second component correspondingly also shows
  that $(\overline{p_2},\overline{g_2})$ may also fail to satisfy PutGet.
\end{remark}



\begin{remark}
It is worth remarking, as noted elsewhere \cite{jrjot}, that the composition just 
defined is reminiscent of, but different from, the normal composition of spans in
a category with pullbacks.  The peak of the span, $T$, is indeed a pullback, but not
in the category $\alens$.  Since the pullback is calculated in $\set$, the pullback
projections are not a priori lenses, but the construction shows how to extend them 
to be lenses in a canonical way.

The construction presented here is also more general than the usual construction
because it has to deal with (right leg) lenses that might not satisfy PutGet.  We will comment 
further on this in the discussion section below. 
\end{remark}

Using the techniques of \cite{jrjot}, it can be shown that $\equiv_{sp}$ is a
congruence for this more general composition of spans of asymmetric lenses.
Thus the composition is well-defined on equivalence classes and provides a 
symmetric lens composition for those symmetric lenses whose left leg satisfies
PutGet.  We thus make the following definition.

\begin{definition}
  We define the symmetric monoidal category $\slens$ to have sets as objects,
  symmetric lenses with left leg satisfying PutGet as arrows, and monoidal
  product given by cartesian product of sets.
\end{definition}

Again, the reader should note that this is slightly more general than other 
definitions of categories of symmetric lenses, which normally require both legs
to satisfy PutGet (and possibly other conditions too). Nonetheless, using the
standard techniques it is straightforward to check that this composition rule
is associative and unital, and moreover that $\slens$ is a well-defined
symmetric monoidal category. Identity symmetric lenses are simply given by the
span in which both legs are identity asymmetric lenses.

\section{Learners}\label{sec-Learners}

Learners provide a categorical framework for modelling supervised learning
algorithms. They can be seen as parametrised version of asymmetric lenses. In
this section we introduce the basic ideas; more detail can be found in
\cite{FST17}.

\begin{definition}
  A \define{learner} $(P,I,U,r)\colon A \to B$ is a set $P$, together with
  three functions: an \define{implementation} $I\colon P \times A \to B$, an
  \define{update} $U\colon B \times P\times A \to P$, and a \define{request}
  $r\colon B \times P \times A \to A$.
\end{definition}

The goal of supervised learning is to approximate a function $f\colon A \to B$
using pairs $(a,f(a)) \in A \times B$ of sample values, or \emph{training data}.
We view $P$ as a set of parameters, and the implementation function as detailing how
this set $P$ parametrises functions, seen as hypotheses, $A \to B$. Next, given
a current hypothesis $p \in P$ and training datum $(a,b) \in A \times B$, the
update and request functions describe two ways to react to differences between
$I(p,a)$ and $b$: first by \emph{updating} the hypothesis $p$ to $U(b,p,a)$, and
second by \emph{requesting} an alternative input $r(b,p,a)$.

\begin{remark}
  While the implementation and update functions are evidently necessary
  structure for supervised learning, the role of the request function is more
  subtle. Indeed, the request function only becomes necessary through
  compositional considerations: it is what permits the construction of new
  learners by interconnecting given ones. Crucially, it captures the
  backpropagation part of the widely-used backpropagation algorithm for
  efficient training neural networks. Further discussion regarding
  interpretation of the request function can be found in \cite[Remark
  II.2]{FST17}.
\end{remark}

\begin{example} \label{eg-EuclideanLearners}
  Learners are not required to obey any axioms, and so are straightforward to
  construct. There are, however, learners which have been shown to be more useful than others in
  practice. One useful way of constructing learners is by using gradient
  descent on any differentiably parametrised class of functions, such as one
  defined using a neural net.

  Indeed, given a set $\rr^k$ and differentiable function $I\colon \rr^k \times
  \rr^m \to \rr^n$, as well as a real number $\epsilon > 0$ that we call a
  \emph{step size}, we may define a learner $(P,I,U,r) \colon \rr^m \to \rr^n$
  by setting
  \[
    U(b,p,a) = p - \epsilon \nabla_p \tfrac12 \lVert I(p,a) - b \rVert^2,
  \]
  \[
    r(b,p,a) = a - \nabla_a \tfrac12 \lVert I(p,a) - b \rVert^2,
  \]
  where $\lVert x \rVert$ is the Euclidean norm on $\rr^n$.

  A key property of the category of learners is that this interpretation of a
  differentiable function $I$ is functorial, and indeed this functor captures
  the structure of the backpropagation algorithm. For more details see
  \cite[Theorem III.2]{FST17}.
\end{example}

To state the aforementioned functoriality result, we must first describe what
it means to compose learners. As with symmetric lenses, this first relies on
stating what it means for two learners to be equivalent.

\begin{definition} \label{def-equivlearn}
Given two learners $(P,I,U,r), (P',I',U',r')\colon A \to B$ a function
$\equivf\colon P \to P'$ is said to satisfy \define{conditions $(E')$} when
\begin{enumerate}[label=(\roman*)]
\item $\equivf$ is surjective.
\item $\equivf$ preserves implementations: $I'(\equivf(p),a) = I(p,a)$.
\item $\equivf$ preserves updates: $U'(b,\equivf(p),a) = \equivf(U(b,p,a))$.
\item $\equivf$ preserves requests: $r'(b,\equivf(p),a) = r(b,p,a)$.
\end{enumerate}
\end{definition}
Just as for symmetric lenses, this generates an equivalence relation $\equiv_l$ on learners.
Equivalence classes of learners form the morphisms of a symmetric monoidal category.




\begin{definition} \label{def-compositelearners}
  The symmetric monoidal category $\lrn$ has sets as objects and $\equiv_l$-classes of learners as morphisms.

  The composite of learners
  \[
    A \xrightarrow{(P,I,U,r)} B \xrightarrow{(Q,J,V,s)} C.
  \]
  is defined to be $(Q\times P,\, I \ast J,\, U \ast V,\, r\ast s)$, where
  \begin{align*}
    (I \ast J)(q,p,a) &= J(q,I(p,a)), \\
    (U\ast V)(c,q,p,a) &= \Big(U\big(s(c,q,I(p,a)),p,a\big),
    V\big(c,q,I(p,a)\big)\Big), \\
    (r\ast s)(c,q,p,a) &= r\big(s(c,q,I(p,a))\big).
  \end{align*}

  The monoidal product of objects $A$ and $B$ is their cartesian product $A \times
  B$, while the monoidal product of morphisms $(P,I,U,r)\colon A \to B$ and
  $(Q,J,V,s)\colon C \to D$ is $(P\times Q,\,I\prl J,\,U\prl V,\,r\prl s)$, where
  the implementation function is
  \begin{align*}
    (I\prl J)(p,q,a,c) &= (I(p,a),J(q,c)), \\
    (U\prl V)(b,d,p,q,a,c) &= (U(b,p,a),V(d,q,c)), \\
    (r\prl s)(b,d,p,q,a,c) &= (r(b,p,a),s(d,q,c)).
  \end{align*}
\end{definition}

\begin{remark}
A proof that this definition indeed specifies a well defined symmetric monoidal
category follows from the same arguments as those given in \cite[Proposition
II.4]{FST17}.  Note, however, a key change: in the setting of \cite{FST17},
conditions (E$'$) are strengthened to require $f$ be a bijection. The
requirement that $f$ be a bijection was made to avoid a digression about
differentiability in \cite[Definition III.1]{FST17}, and yet still permit a
straightforward statement of the main theorem \cite[Theorem III.2]{FST17}.
Nonetheless, the authors of \cite{FST17} believe conditions (E$'$) give the more
natural notion of equivalence of learner, as it allows identification of
parameters that have the same implementation. We believe that the correspondence
with conditions (E) from \cite{jrjot} via Theorem~\ref{thm-main} provides further
evidence of this claim; indeed, we view this added clarity as a positive
outcome of this work and the interaction between our two communities.
\end{remark}

For clarity, let us also present the composition rule using string diagrams in
$(\mathsf{Set},\times)$. Given learners $(P,I,U,r)$ and $(Q,J,V,s)$ as above, the composite
implementation function can be written as
\[
\begin{tikzpicture}[oriented WD]
	\node[bb port sep=1, bb={2}{1}]                            (I)     {$I$};
	\node[bb port sep=1, bb={2}{1}, above right=-1 and .5 of I](J)     {$J$};
	\node[ibb={3}{1}, fit=(I) (J)]                          (outer) {};
	\node at ($(outer_in1')-(0.3,0)$) {\footnotesize $Q$};
	\node at ($(outer_in2')-(0.3,0)$) {\footnotesize $P$};
	\node at ($(outer_in3')-(0.3,0)$) {\footnotesize $A$};
	\node at ($(outer_out1')+(0.3,0)$) {\footnotesize $C$};
	\draw (outer_in1) to (J_in1);
	\draw (outer_in2) to (I_in1);
	\draw (outer_in3) to (I_in2);
	\draw (I_out1') to (J_in2);
	\draw (J_out1') to (outer_out1|-J_out1);
\end{tikzpicture}
\]
while the composite update--request function $(U \ast V, r \ast s)$ can be
written as:
\[
\begin{tikzpicture}[oriented WD]
	\node[bb port sep=2, bb={3}{2}](V)   {$V,s$};
	\node[bb port sep=1, bb={2}{1}, left=.75 of V_in3]                 (I)    {$I$};
	\node[bb port sep=2, bb={3}{2}, below right=-1.5 and 1 of V](U)	{$U,r$};
	\node[ibb={4}{3}, fit=(I) (V) (U)]                          (outer) {};
	\begin{scope}[font=\footnotesize]
  	\node at ($(outer_in1')-(0.3,0)$) {$C$};
  	\node at ($(outer_in2'|-V_in2)-(0.3,0)$) {$Q$};
  	\node at ($(outer_in3')-(0.3,0)$) {$P$};
  	\node at ($(outer_in4')-(0.3,0)$) {$A$};
  	\node at ($(outer_out1'|-V_out1)+(0.3,0)$) {$Q$};
  	\node at ($(outer_out2'|-U_out1)+(0.3,0)$) {$P$};
  	\node at ($(outer_out3'|-U_out2)+(0.3,0)$) {$A$};
  	\draw let \p1=(I.south east), \p2=($(outer_in1)$), \n1=\bbportlen in
  		(outer_in1) to (\x1+\n1, \y2) to (V_in1);
  	\draw (outer_in2|-V_in2) to (V_in2);
  	\draw (outer_in3) -- ($(outer_in3)+(.2,0)$) to (I_in1);
  	\draw let \p1=(I.south west), \p2=($(U_in2)$), \n1=\bbportlen in
  		(outer_in3) -- ($(outer_in3)+(.2,0)$) to (\x1-\n1, \y2) -- (U_in2);
  	\draw (outer_in4) -- ($(outer_in4)+(.2,0)$) to (I_in2);
  	\draw let \p1=(I.south west), \p2=($(U_in3)$), \n1=\bbportlen in
  		(outer_in4) -- ($(outer_in4)+(.2,0)$)to (\x1-\n1, \y2) -- (U_in3);
  	\draw (V_out1) to (outer_out1|-V_out1);
  	\draw (V_out2) to node[above=1pt, pos=.1] {$B$} (U_in1);
  	\draw (U_out1) to (outer_out2|-U_out1);
  	\draw (U_out2) to (outer_out3|-U_out2);
  	\draw (I_out1) to node[above] {$B$} (V_in3);
	\end{scope}
\end{tikzpicture}
\]

The monoidal product of learners is represented in string diagrams as follows.
The product implementation function $I \prl J$ is
\[
\begin{tikzpicture}[oriented WD]
	\node[bb port sep=1, bb={2}{1}]                            (I)     {$I$};
	\node[bb port sep=1, bb={2}{1}, below= of I]                  (J)     {$J$};
	\node[ibb={4}{2}, fit=(I) (J)]                          (outer) {};
	\node at ($(outer_in1')-(0.3,0)$) {\footnotesize $P$};
	\node at ($(outer_in2')-(0.3,0)$) {\footnotesize $Q$};
	\node at ($(outer_in3')-(0.3,0)$) {\footnotesize $A$};
	\node at ($(outer_in4')-(0.3,0)$) {\footnotesize $C$};
	\node at ($(outer_out1'|-I_out1)+(0.3,0)$) {\footnotesize $B$};
	\node at ($(outer_out2'|-J_out1)+(0.3,0)$) {\footnotesize $D$};
	\draw (outer_in1) to (I_in1);
	\draw (outer_in2) to (J_in1);
	\draw (outer_in3) to (I_in2);
	\draw (outer_in4) to (J_in2);
	\draw (I_out1) to (outer_out1|-I_out1);
	\draw (J_out1) to (outer_out2|-J_out1);
\end{tikzpicture}
\]
while the composite update and request function $(U \prl V, r\prl s)$ is
\[
\begin{tikzpicture}[oriented WD,scale=.7]
	\node[bb port sep=2, bb={3}{2}]                            (U)     {$U,r$};
	\node[bb port sep=2, bb={3}{2}, below= of U]               (V)     {$V,s$};
	\node[ibb={6}{4}, fit=(U) (V)]                          (outer) {};
	\node at ($(outer_in1')-(0.35,0)$) {\footnotesize $B$};
	\node at ($(outer_in2')-(0.35,0)$) {\footnotesize $D$};
	\node at ($(outer_in3')-(0.35,0)$) {\footnotesize $P$};
	\node at ($(outer_in4')-(0.35,0)$) {\footnotesize $Q$};
	\node at ($(outer_in5')-(0.35,0)$) {\footnotesize $A$};
	\node at ($(outer_in6')-(0.35,0)$) {\footnotesize $C$};
	\node at ($(outer_out1'|-U_out1)+(0.35,0)$) {\footnotesize $P$};
	\node at ($(outer_out2'|-U_out2)+(0.35,0)$) {\footnotesize $Q$};
	\node at ($(outer_out3'|-V_out1)+(0.35,0)$) {\footnotesize $A$};
	\node at ($(outer_out4'|-V_out2)+(0.35,0)$) {\footnotesize $C$};
	\draw (outer_in1) to (U_in1);
	\draw (outer_in2) to (V_in1);
	\draw (outer_in3) to (V_in2);
	\draw (outer_in4) to (U_in2);
	\draw (outer_in5) to (U_in3);
	\draw (outer_in6) to (V_in3);
	\draw (U_out1) to (outer_out1|-U_out1);
	\draw (U_out2) to (outer_out2|-U_out2);
	\draw (V_out1) to (outer_out3|-V_out1);
	\draw (V_out2) to (outer_out4|-V_out2);
\end{tikzpicture}
\]

\begin{remark} \label{rem-triviallearner}
Note that lenses are learners with trivial, that is singleton, parameter set (as observed already
by Fong et al \cite{FST17}):
\begin{center}
  \begin{tabular}{c|c}
    \textbf{Learner} $A \to B$ & \textbf{Asymmetric lens} $A \to B$ \\\hline
    Hypotheses $P$ & {\bf $1$}\\
    Implementation $I\colon P \times A \to B$ & Get $g\colon A \to B$ \\
    Update $U\colon B \times P \times A \to P$ & --- \\
    Request $r\colon B \times P \times A \to A$ & Put $p\colon B \times A \to
    A$
  \end{tabular}
\end{center}
\end{remark}

This in fact extends to an inclusion of categories.

\begin{proposition}
  There is a faithful, identity-on-objects, symmetric monoidal functor $\alens \to
  \lrn$.
\end{proposition}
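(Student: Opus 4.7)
The plan is to extend Remark~\ref{rem-triviallearner} into a functor by sending each asymmetric lens to a learner with trivial parameter set. Concretely, on objects put $F(A) = A$, and on a lens $(p,g)\colon A \to B$ put $F(p,g) = (1, I, U, r)$ where $1$ is a one-point set, $I(\ast, a) = g(a)$, $U(b, \ast, a) = \ast$, and $r(b, \ast, a) = p(b, a)$. The proof then reduces to checking that $F$ preserves identities and composition, is faithful, and respects the symmetric monoidal structure.

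For functoriality on identities, I would unfold Definition~\ref{def-compositelearners} to see that the identity learner on $A$ has parameter set $1$, implementation $\id_A$, trivial update, and request projecting onto the first argument; this is exactly $F(\pi_1, \id_A)$. For composition, taking $F(p,g) = (1,I,U,r)$ and $F(p',g') = (1,J,V,s)$, the formulas of Definition~\ref{def-compositelearners} with both parameter sets singleton yield a composite learner whose implementation reduces to $g' \circ g$, whose update is trivial, and whose composite request $(r \ast s)(c,\ast,\ast,a) = r\bigl(s(c,\ast,g(a)), \ast, a\bigr)$ unwinds to the composite Put of Definition~\ref{def-AsymLensComposition}. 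Hence $F((p,g) \cp (p',g')) = F(p,g) \cp F(p',g')$.

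For faithfulness, every learner in the image of $F$ has parameter set the singleton $1$, so any witness of $F(p,g) \equiv_l F(p',g')$ is a surjection $1 \to 1$, which is forced to be the identity. Conditions $(E')$ then collapse to $I = I'$ and $r = r'$, i.e.\ $g = g'$ and $p = p'$. Thus $F$ is injective on hom-sets.

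For the symmetric monoidal structure, both categories use cartesian product of sets on objects, which $F$ preserves strictly. On morphisms, $F$ applied to the evident monoidal product of two lenses agrees with the monoidal product $\prl$ of their images by direct computation from Definition~\ref{def-compositelearners} with singleton parameter sets, and the symmetry, associator, and unitor isomorphisms are all inherited from $\set$ in both categories and so are preserved. No step presents a real obstacle: every verification is a routine unwinding of definitions, and the only noteworthy conceptual point is that faithfulness comes for free because $\equiv_l$ degenerates on singleton-parameter learners.
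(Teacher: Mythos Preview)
Your approach is exactly the paper's: the paper's own proof is the single sentence ``It is straightforward to check that the correspondence laid out above preserves composition and monoidal products,'' and you are simply carrying out those checks in detail.

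There is one small gap in your faithfulness argument. You write that ``any witness of $F(p,g) \equiv_l F(p',g')$ is a surjection $1 \to 1$,'' but recall that $\equiv_l$ is the equivalence relation \emph{generated} by the functions satisfying conditions~$(E')$, so two singleton-parameter learners could in principle be related via a zigzag passing through learners with larger parameter sets, for instance a span $(1,I,U,r) \leftarrow (P,I'',U'',r'') \to (1,I',U',r')$ with $P$ arbitrary. Your argument as written does not cover this case. The repair is immediate: condition~(ii) of~$(E')$ applied to a surjection $(P,I'',\ldots) \to (1,I,\ldots)$ forces $I''(m,a) = I(\ast,a)$ for every $m \in P$, and condition~(iv) likewise forces $r''(b,m,a) = r(b,\ast,a)$; in the other direction, any surjection out of a singleton forces its target to be a singleton as well. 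An easy induction along any zigzag then shows that the implementation and request data are constant across all its nodes, yielding $g=g'$ and $p=p'$ at the two ends as you claim.
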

\begin{proof}
  It is straightforward to check that the correspondence laid out above
  preserves composition and monoidal products.
\end{proof}

\section{The Main Result}\label{sec-MainResult}

While it is interesting, it is perhaps not surprising, and may not be especially enlightening, to find 
that lenses are learners with trivial parameter set (which amounts to barely being a learner at all). 
There are other ways of seeing relationships between lenses and learners, and in particular of
seeing the entire gamut of learners (not just ones with trivial parameters) as lenses.

We first note the following.

\begin{lemma} \label{lem-learnalens}
    Every learner $(P,I,U,r)\colon A \to B$ is an asymmetric lens $(p,g)\colon P \times A \to B$.
\end{lemma}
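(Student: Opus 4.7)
The plan is to construct the Get and Put of the lens directly from the learner data, with no axiom-checking required since the definition of an asymmetric lens in this paper carries no conditions. Take the Get to be the implementation itself, $g = I \colon P \times A \to B$, whose type matches without any reorganization. For the Put $p \colon B \times (P \times A) \to P \times A$, pair the update and request of the learner: set
\[
p\bigl(b,(x,a)\bigr) = \bigl(U(b,x,a),\, r(b,x,a)\bigr).
\]
The types of $U \colon B \times P \times A \to P$ and $r \colon B \times P \times A \to A$ guarantee that $p$ lands in $P \times A$ as required.

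Since being a bare asymmetric lens requires only a Put and a Get of the correct types, the construction above completes the proof; no verification of PutGet or GetPut is needed (and indeed the resulting lens will generally satisfy neither). The content of the lemma is therefore conceptual rather than computational: the three-sorted package $(I,U,r)$ of a learner from $A$ to $B$ reassembles, once one absorbs the parameter set $P$ into the source, into precisely the two-sorted package defining a lens from $P \times A$ to $B$.

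The main thing to flag, rather than a true obstacle, is that this object-level correspondence is not yet compositional: composition of learners involves the implementation $I$ being re-fed into the update and request, whereas composition of asymmetric lenses uses the Get in a different pattern (cf.\ Definition~\ref{def-AsymLensComposition}). So while Lemma~\ref{lem-learnalens} suffices to view every learner as a lens, the work of turning this into a functor with a good target (and eventually, as the introduction announces, into a faithful embedding into $\slens$ via constant-complement spans) is postponed to the theorems that follow.
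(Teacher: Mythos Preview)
Your proof is correct and matches the paper's own proof essentially verbatim: the paper also sets $g = I$ and $p = \langle U, r\rangle$, noting that no axioms need be checked. Your additional remarks about non-compositionality are accurate and anticipate exactly the discussion the paper gives immediately after the lemma.
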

\begin{proof}
    Let $g = I\colon P \times A \to B$ and let $p = \langle U,r \rangle\colon
    B \times (P \times A) \to P \times A$ be the unique function into the
    product $P \times A$ determined by $U$ and $r$.
\end{proof}

The resulting lenses $(\langle U,r \rangle, I)\colon P \times A \to B$ will
not in general be well-behaved. In particular the training process in
supervised learning would not usually be expected to satisfy PutGet. Whether
learners, when viewed as in this lemma as lenses should satisfy GetPut is a
subject of ongoing research, so for now we make no assumptions. We make a
comment again on this in Section~\ref{sec-Discussion}.

Of course, merely observing that learners are lenses in this way is not
especially useful if the composition of learners does not correspond to the
composition of lenses. And it cannot. Given two learners $(P,I,U,r)\colon A \to
B$ and $(P',I',U',r')\colon B \to C$ their corresponding lenses under the lemma
are not even composable, since they have types $(p,g)\colon P \times A \to B$
and $(p',g')\colon P' \times B \to C$.

As it happens, however, there is a Kleisli-like composition of these lenses that uses
the monoidal product in the category $\alens$ to convert the first of these
lenses, by taking its cartesian product with $P'$, to obtain a lens $P'\times
P \times A \to P' \times B$ which is then composable with the second lens.
Remarkably the resulting composition \emph{does} correspond precisely to the
composition of the original learners. But all this can be expressed better by
relating learners to certain \emph{symmetric} lenses, and we do that now.

\begin{lemma}\label{lem-LearnersAreSymmetricLenses}
    Every learner $(P,I,U,r)\colon A \to B$ is a symmetric lens 
  \[
    A \xleftarrow{(k,\pi_2)} P \times A \xrightarrow{(\langle U,r\rangle,I)} B
  \]
  with left leg a constant complement (and therefore well-behaved) lens.
\end{lemma}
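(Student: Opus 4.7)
The plan is to unpack what the statement requires and then assemble the two legs of the span directly from the data of the learner.  A symmetric lens from $A$ to $B$ is an equivalence class of spans of asymmetric lenses $A \xleftarrow{\,} S \xrightarrow{\,} B$, so I need to (i) choose the apex $S = P \times A$, (ii) exhibit a constant complement lens as the left leg, and (iii) exhibit the right leg as an asymmetric lens using Lemma \ref{lem-learnalens}.

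For the left leg, I would apply Example \ref{eg-ConstComp} with the roles reorganised: take $A$ as the ``kept'' factor and $P$ as the complement, so the constant complement lens $(k,\pi_2)\colon P \times A \to A$ has Get $\pi_2$ (projection onto the second factor) and Put $k\colon A \times (P \times A) \to P \times A$ defined by $k(a',(p,a)) = (p,a')$ (this is exactly the overloaded use of $k$ introduced just after Example \ref{eg-ConstComp}).  Example \ref{eg-ConstComp} already records that such a lens is well-behaved, so in particular it satisfies PutGet, which is what is needed for $(k,\pi_2)$ to count as a valid left leg for a symmetric lens in $\slens$.

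For the right leg, I would invoke Lemma \ref{lem-learnalens} directly: the data $I,U,r$ of the learner yield the asymmetric lens $(\langle U,r\rangle, I)\colon P \times A \to B$ with Get $I\colon P \times A \to B$ and Put $\langle U, r\rangle\colon B \times (P \times A) \to P \times A$, the pairing of $U$ and $r$ into the cartesian product.  Since both legs share the common head $P \times A$, they together form a span $A \xleftarrow{(k,\pi_2)} P \times A \xrightarrow{(\langle U,r\rangle,I)} B$, and the equivalence class of this span under $\equiv_{sp}$ is the desired symmetric lens.

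There is effectively no obstacle here beyond notational housekeeping, since the lemma is an assembly of constructions already established: the constant complement machinery is provided by Example \ref{eg-ConstComp}, and Lemma \ref{lem-learnalens} supplies the right leg.  The only subtle point worth flagging in the write-up is that the right leg is genuinely \emph{not} assumed to be well-behaved — this is exactly why the generalised composition rule introduced in Definition \ref{def-SymmetricLensComposition}, which only requires the left leg to satisfy PutGet, is needed for the symmetric lens to live in $\slens$.
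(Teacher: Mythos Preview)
Your proposal is correct and follows essentially the same approach as the paper's own proof: invoke Lemma~\ref{lem-learnalens} for the right leg and Example~\ref{eg-ConstComp} for the constant complement left leg. The paper's version is terser, but your additional unpacking (the explicit formula for $k$ and the remark about why only the left leg need satisfy PutGet) is accurate and helpful.
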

\begin{proof}
  The right leg $(\langle U,r \rangle,I)$ is the asymmetric lens given in
  Lemma~\ref{lem-learnalens}, while the left leg $(k,\pi_2)$ is the constant
  complement (see Example~\ref{eg-ConstComp}) lens of the specified type.
\end{proof}

This gives the following correspondence:

\begin{center}
  \begin{tabular}{c|c}
    \textbf{Learner} $A \to B$ & \textbf{Symmetric lens} $A \to B$ \\\hline
    Hypotheses $P$ & Left leg complement $P$\\
    Implementation $I\colon P \times A \to B$ & Right leg Get $g\colon P \times A \to B$ \\
    Update $U\colon B \times P \times A \to P$ & Right leg Put $p\colon B \times P \times A \to P \times A$ (1st component) \\
    Request $r\colon B \times P \times A \to A$ & Right leg Put $p\colon B \times P \times A \to P \times A$ (2nd component)
  \end{tabular}
\end{center}

It should be remarked that in both the preceding lemmas we would normally use the word ``yields'' rather than ``is'', 
and we would be explicit about the process that converts a learner into a lens.  However, here we have used
``is'' to emphasise that what we are describing is nothing more than minor repackaging of the data.  Furthermore,
as the following theorem shows, the important interactions among the data (composition and monoidal product)
are exactly the same, whether one treats the data as learners or lenses.

We are now in a position to state the main result.

\begin{theorem} \label{thm-main}
    There is a faithful, identity-on-objects, symmetric monoidal functor
  $\lrn \to \slens$ mapping 
  \[
    (P,I,U,r)\colon A \to B
  \] to 
  \[
     A \xleftarrow{(k,\pi_2)} P \times A \xrightarrow{(\langle U,r\rangle ,I)} B.
  \]
\end{theorem}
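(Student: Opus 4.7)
The plan is to define the functor $F\colon \lrn \to \slens$ as the identity on objects and on morphisms by sending a learner $(P,I,U,r)\colon A \to B$ to the equivalence class of the span
\[
A \xleftarrow{(k,\pi_2)} P \times A \xrightarrow{(\langle U,r\rangle,I)} B
\]
of Lemma \ref{lem-LearnersAreSymmetricLenses}. The verifications then fall into four steps: (i) well-definedness on $\equiv_l$-classes together with faithfulness; (ii) preservation of identities; (iii) preservation of composition; and (iv) preservation of the monoidal product.

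For (i), I would show that a function $\equivf\colon P \to P'$ satisfies conditions $(E')$ between learners precisely when $\equivf \times \id_A\colon P\times A \to P'\times A$ satisfies conditions $(E)$ between the associated spans. Surjectivity transfers; commutation with the left-leg structure $(\pi_2, k)$ is automatic since $\equivf \times \id_A$ acts as the identity on the $A$-coordinate; and conditions (E$'$ii), (E$'$iii), (E$'$iv) are literally preservation of the right-leg Get and the two coordinates of the right-leg Put. Conversely, any $h\colon P\times A \to P'\times A$ satisfying $(E)$ must commute with $\pi_2$ (forcing its second coordinate to be $\id_A$) and with $k$ (forcing its first coordinate to be independent of $a$), so $h = \equivf \times \id_A$ for some $\equivf\colon P \to P'$. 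Applied zigzag-by-zigzag, this yields both well-definedness and faithfulness.

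Preservation of identities is immediate upon identifying $1 \times A \cong A$. For preservation of composition, given composable learners $(P,I,U,r)\colon A \to B$ and $(Q,J,V,s)\colon B \to C$, the pullback in Definition \ref{def-SymmetricLensComposition} is
\[
T = \{((p,a),(q,b)) \in (P\times A) \times (Q \times B) : I(p,a) = b\},
\]
and the bijection $\phi\colon (Q\times P)\times A \to T$, $(q,p,a)\mapsto ((p,a),(q,I(p,a)))$, carries $T$ onto the parameter-extended set of the composite learner. Under $\phi$, I would check that the four constituent maps of the composite span match those of $F(Q\times P,\, I\ast J,\, U\ast V,\, r\ast s)$. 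Both Gets reduce to routine projection calculations and the left-leg Put reduces to bookkeeping with the constant complement.

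The one substantive check, and the main obstacle, is the right-leg Put. Unfolding $(\overline{p_2}, \overline{g_2}) \cp (\langle V,s\rangle, J)$ with $(p_2, g_2) = (\langle U,r\rangle, I)$ according to Definitions \ref{def-AsymLensComposition} and \ref{def-SymmetricLensComposition} yields, after applying $\phi^{-1}$, the three components $V(c,q,I(p,a))$, $U(s(c,q,I(p,a)),p,a)$, and $r(s(c,q,I(p,a)),p,a)$, which together comprise $\langle U\ast V, r\ast s\rangle(c,q,p,a)$ of Definition \ref{def-compositelearners}. The intricate second component of $\overline{p_2}$, introduced in Definition \ref{def-SymmetricLensComposition} precisely because $(p_2,g_2)$ need not satisfy PutGet, is exactly what reassembles the updated parameter into a legal element of the pullback $T$. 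Preservation of the monoidal product is a simpler analogue via $(P\times A)\times(Q\times C) \cong (P\times Q)\times(A\times C)$, and the symmetric-monoidal coherence is inherited from that of $(\set,\times)$. The left legs in the image are constant complement and hence satisfy PutGet, so the image genuinely lies in $\slens$.
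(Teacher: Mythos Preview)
Your proposal is correct and follows the same overall outline as the paper: define $F$ as stated, check well-definedness on equivalence classes, then verify preservation of identities, composition, and monoidal product. The one substantive difference is in step~(iii). The paper first isolates an auxiliary result (Lemma~\ref{lem-RightLegs}) that computes, in closed form, the composite of two symmetric lenses whose left legs are constant complement, and then compares those formulas term-by-term with Definition~\ref{def-compositelearners}. You instead work directly with the pullback $T$ and the bijection $\phi\colon (Q\times P)\times A \to T$, unfolding Definition~\ref{def-SymmetricLensComposition} in place. Both routes arrive at the same elementwise comparison; the paper's route factors out a reusable lemma about constant-complement composites, while yours is more self-contained and avoids the intermediate bookkeeping.

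You also treat faithfulness more explicitly than the paper, which asserts it but gives no argument. One caution: your ``zigzag-by-zigzag'' reduction tacitly assumes that every intermediate span in a $\equiv_{sp}$-zigzag already has head of the form $P''\times A$ with constant-complement left leg, which is not given a priori. The gap can be closed by observing that any $(E)$-quotient of such a span again has constant-complement left leg (the left-leg Get and Put conditions force the quotient head to fibre trivially over $A$), and that $\equiv_{sp}$-equivalent spans always share a common $(E)$-quotient; but this step deserves to be stated. The paper does not address this point either.
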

%

With the correspondences established, the proof is largely a routine
verification. Since $\lrn$ and $\slens$ both have sets as objects, we may define
the functor to act as the identity on objects. On arrows, the functor acts as in
correspondence presented in Lemma~\ref{lem-LearnersAreSymmetricLenses} and
outlined in the table above; the fact that this operation is independent of
representive chosen follows immediately from the similarity between equivalence
relation conditions (E) and (E$'$) (see Definitions~\ref{def-equivslens} and
\ref{def-equivlearn}). It is easy to see that this proposed functor preserves
identities and the monoidal product.

The main difficulty is proving that the proposed functor preserves composition.
Before walking through this in detail on the next page, we first present a
useful lemma regarding composition of symmetric lenses.

Note that in Section~\ref{sec-SymmetricLenses} the composition of symmetric
lenses with left legs satisfying PutGet was presented in its maximum
generality. The motivation for that will be discussed in
Section~\ref{sec-Discussion}. To prove the above theorem, it will be helpful
to understand the simpler case where the left legs are known to be
constant complement. We summarise our notation for the composition in the following
diagram (in which notation is abused in the usual way for constant complement
lenses):
\[
\bfig
\Atriangle(400,250)/->`->`/<400,250>[T`{P_1\times A_1}`{P_2\times A_2};(k,\pi)`(\bar{p_2},\bar{g_2})` ]
\Vtriangle(400,0)/`->`->/<400,250>[{P_1\times A_1}`{P_2\times A_2}`A_2; `(p_2,g_2)`(k,\pi)]
\morphism(400,250)|a|<-400,-250>[{P_1\times A_1}`A_1;(k,\pi)]
\morphism(1200,250)|a|<400,-250>[{P_2\times A_2}`A_3;(p_3,g_3)]
\efig
\]
The following lemma gives an explicit formula for the composite of two symmetric lenses
whose left legs are constant complement.

\begin{lemma}\label{lem-RightLegs}
Suppose that  
$A_1 \xleftarrow{(k,\pi_2)} P_1 \times A_1 \xrightarrow{(p_2,g_2)} A_2$ and
$A_2 \xleftarrow{(k,\pi_2)} P_2 \times A_2 \xrightarrow{(p_3,g_3)} A_3$ are
spans of asymmetric lenses whose left legs, both denoted $(k, \pi_2)$, are constant complement lenses.
Note that constant complement lenses satisfy PutGet and hence we may compose
them using Definition~\ref{def-SymmetricLensComposition}. 

Their composite symmetric lens from $A_1$ to $A_3$ is represented by
\[
A_1 \xleftarrow{(k,\pi_3)} P_2 \times P_1 \times A_1 \xrightarrow{(p,g)} A_3,
\]
where $p$ and $g$ are given by
\[
p(a_3,(m_1,m_2,a_1)) = \big(m'_2  ,p_2(a'_2,(m_1,a_1)) \big),
\]
in which we let $(m_2',a_2') = p_3(a_3, (m_2,g_2(m_1,a_1))) \in P_2 \times A_2$, 
and
\[
g(m_2,m_1,a_1) = g_3(m_2,g_2(m_1,a_1)).
\]
(Note that to avoid overloading the notation $p_i$, we have written $m_i$ for
elements of $P_i$.)
\end{lemma}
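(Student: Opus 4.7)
The approach is a direct, if slightly fiddly, application of Definition~\ref{def-SymmetricLensComposition} with the explicit form $k(a',(m,a)) = (m,a')$ of the constant complement Put substituted throughout. The bulk of the proof is bookkeeping; the key observation is that when $q_2 = k$, the auxiliary $A_2$ coordinate in the pullback is forced to a function of the other data, collapsing the apex to $P_2 \times P_1 \times A_1$.

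First, I would identify the apex. Definition~\ref{def-SymmetricLensComposition} takes $T = \{((m_1,a_1),(m_2,a_2)) \mid g_2(m_1,a_1) = a_2\}$, and the constraint identifies $T$ with $P_2 \times P_1 \times A_1$ via the bijection $(m_2,m_1,a_1) \leftrightarrow ((m_1,a_1),(m_2, g_2(m_1,a_1)))$. All subsequent data is transported across this isomorphism before being compared with the lemma's formulas.

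Next, I would verify each leg by substituting $q_2 = k$ and then composing via Definition~\ref{def-AsymLensComposition}. For the left leg, $\overline{h_2}$ is projection onto $P_1 \times A_1$ which, further composed with $\pi_2$, gives $\pi_3$ on $P_2 \times P_1 \times A_1$. The Put $\overline{q_2}(s_1',(s_1,s_2)) = (s_1', k(g_2(s_1'),s_2))$ simplifies to $(s_1',(m_2,g_2(s_1')))$, and composing with the first span's left leg $(k,\pi_2)$ yields the constant complement Put on $P_2 \times P_1 \times A_1$ with complement $P_2 \times P_1$. Thus the left leg of the composite is $(k,\pi_3)$, as asserted.

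Finally I would compute the right leg $(p,g)$. The Get is immediate: $g_3 \circ \overline{g_2}$ sends $(m_2,m_1,a_1)$ to $g_3(m_2,g_2(m_1,a_1))$, matching the stated formula for $g$. For the Put, Definition~\ref{def-AsymLensComposition} prescribes first applying $p_3$ to $(\overline{g_2}(t), a_3)$ to obtain $(m_2',a_2') := p_3(a_3,(m_2,g_2(m_1,a_1)))$, then feeding this into $\overline{p_2}$. The only subtle step is the simplification of the second component $q_2(g_2(p_2(h_2(s_2'),s_1)), s_2')$ appearing in the definition of $\overline{p_2}$: with $q_2 = k$ this collapses, so that the overall value of $\overline{p_2}$ corresponds under the isomorphism to $\bigl(m_2', p_2(a_2',(m_1,a_1))\bigr)$ in $P_2 \times P_1 \times A_1$, matching the lemma's formula for $p$. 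This collapse of the $A_2$ coordinate is the single nontrivial point of the argument; everything else is substitution.
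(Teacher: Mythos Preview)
Your proposal is correct and follows essentially the same route as the paper's own proof: identify the pullback $T$ with $P_2 \times P_1 \times A_1$ because the $A_2$-coordinate is forced, recognise the left leg as a composite of constant complement lenses (hence itself constant complement), and for the right leg substitute into the formula for $\overline{p_2}$ from Definition~\ref{def-SymmetricLensComposition}, using the fact that $q_2$ is the constant complement Put to collapse the ``somewhat complicated'' second component. The paper spells out the chain of substitutions for $\overline{p_2}$ line by line, but the structure and the single nontrivial observation you highlight are exactly the same.
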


\begin{proof}
Recall that in Definition~\ref{def-SymmetricLensComposition} we use notation as
in the following diagram.
$$
\bfig
\Atriangle(400,250)/->`->`/<400,250>[T`S_1`S_2;(\bar{q_2},\bar{h_2})`(\bar{p_2},\bar{g_2})` ]
\Vtriangle(400,0)/`->`->/<400,250>[S_1`S_2`A_2; `(p_2,g_2)`(q_2,h_2)]
\morphism(400,250)|a|<-400,-250>[S_1`A_1;(q_1,h_1)]
\morphism(1200,250)|a|<400,-250>[S_2`A_3;(p_3,g_3)]
\efig
$$
Consider then, as in Definition~\ref{def-SymmetricLensComposition}, the pullback $T$
in $\set$ of the cospan $P_1 \times A_1 \xrightarrow{g_2} A_2 \xleftarrow{\pi_2}
P_2 \times A_2$.  Knowing how to calculate pullbacks in $\set$, we may suppose
without loss of generality that the elements of $T$ are tuples $(m_2,m_1,a_1)$,
in which there is no $a_2$ explicitly mentioned since it must be equal to
$g_2(m_1,a_1)$, and that $\overline{h_2}$ and $\overline{g_2}$ are $\pi_{23}$ and
$P_2 \times g_2$ respectively.
More explicitly still, $T$ is just the product $P_2 \times P_1 \times A_1$,
$\overline{h_2}$ is the projection onto
$P_1 \times A_1$, and $\overline{g_2}$ is 
the arrow $P_2 \times P_1 \times A_1 \to P_2 \times A_2$ which preserves the $P_2$ value and 
uses $g_2$ to convert the other two values into an $A_2$ value.

According to Definition~\ref{def-SymmetricLensComposition}, the left leg of the composite is the 
composition of two asymmetric lenses denoted there as $(\overline{q_2},\overline{h_2})
\cp (q_1,h_1)$.
In the current context, $(q_1,h_1)$ is the constant complement lens $P_1 \times A_1 \to A_1$, and
we have seen the $\overline{q_2}$ may be taken to be the projection 
$P_2 \times P_1 \times A_1 \to P_1 \times A_1$.  Furthermore the definition of $\overline{h_2}$ in 
Definition~\ref{def-SymmetricLensComposition} is easily seen, up to reordering
of variables, to be 
in this context the constant complement Put.  
Finally, we have already seen in Example~\ref{eg-ConstComp} how the composition of two
constant complement lenses is a constant complement lens, so the left leg of the composition here
is simply the constant complement lens $(k,\pi_3): P_2 \times P_1 \times A_1 \to A_1$.

We turn now to the right hand leg $(p,g)$.  Again Definition~\ref{def-SymmetricLensComposition}
tells us that it is given by the asymmetric lens composition denoted there as
$(\overline{p_2},\overline{g_2}) \cp (p_3,g_3)$, in which $\overline{p_2}$ was defined by what
we referred to there as the ``somewhat complicated expression''.  Referring to 
Definition~\ref{def-AsymLensComposition} for how to compose asymmetric lenses, and using 
the notation of the statement of the lemma, we see that 
\[
p(a_3,(m_1,m_2,a_1)) = \overline{p_2}((m'_2,a'_2),(m_2,m_1,a_1)).
\]
We now show that this becomes, simply by substituting and simplifying,
$\big(m'_2,p_2(a'_2,(m_1,a_1))\big)$.

In detail
\begin{align*}
\overline{p_2}((m'_2,a'_2),(m_2,m_1,a_1)) &= \overline{p_2}(s'_2,(s_1,s_2))
\tag{in the notation of Definition~\ref{def-SymmetricLensComposition}}\\
 &= \big(p_2(h_2(s'_2),s_1),\, q_2(g_2(p_2(h_2(s'_2),s_1)),s'_2)\big)
 \tag{by Definition~\ref{def-SymmetricLensComposition}}\\
 &= \big(p_2(a'_2,s_1),  q_2(g_2(p_2(a'_2,s_1)),s'_2)\big) \tag{since $h_2(s'_2) =
 \pi_2(m'_2,a'_2) = a'_2$}\\
 &= \big(p_2(a'_2,s_1),  q_2(g_2(p_2(a'_2,s_1)),(m'_2,a'_2))\big) \tag{since $s'_2 = (m'_2,a'_2)$}\\
 &= \big(p_2(a'_2,s_1),  (m'_2, g_2(p_2(a'_2,s_1)))\big) \tag{since $q_2$ is constant complement}\\
 &= \big(p_2(a'_2,s_1),  m'_2 \big) \tag{by comment below} \\
 &= \big(p_2(a'_2,(m_1,a_1)),  m'_2\big) \tag{since $s_1 = (m_1,a_1)$}
\end{align*}
The second to last line holds since, as noted above, a fourth component in $T$
is superfluous since it has to be (and indeed is) $g_2$ applied to the first component.
Reordering the variables in that last line, because we have chosen to keep the $A_i$ in the last position,
completes the proof.
\end{proof}

We now return to the proof of Theorem~\ref{thm-main}.
\begin{proof}[Proof of Theorem~\ref{thm-main}]
  We will check that the proposed functor preserves composition. This is simply
  a matter of comparing the Get and Put of the right leg of composite symmetric
  learners, as detailed in Lemma~\ref{lem-RightLegs}, with the formulas for
  composition of learners as detailed in Definition~\ref{def-compositelearners}.

First compare the definition of $I \ast J$ in Definition~\ref{def-compositelearners} with the description of
the Get of the composite right legs, $g(m_2,m_1,a)$ of 
Lemma~\ref{lem-RightLegs}, recalling the correspondence between the implementation operations $I$ and $J$ and
the right leg Gets $g_2$ and $g_3$.  In other words, compare
$$
g(m_2,m_1,a) = g_3(m_2,g_2(m_1,a_1))
$$
with
$$
(I \ast J)(p,q,a) = J(q,I(p,a))
$$
noting the naming of variables means $p$ and $q$ correspond respectively to $m_1$ and $m_2$ (and that while
the order of parameters for $g$ is not important, the choice made for symmetric lens composition was to add
new parameters on the left, corresponding to the choice used in function composition).

Next, we compare the right leg Put $p$ of Lemma~\ref{lem-RightLegs} with the
composite update--request function $\langle U \ast V, r \ast s\rangle$. We do
this by considering each of the two components separately.

The $A_1$ component is $\pi_2(p_2(a'_2,(m_1,a_1)))$, which since $a'_2 =
\pi_2(p_3(a_3,(m_2,g_2(m_1,a_1))))$ is
\[
\pi_2 p_2\big( m_1,a_1,\pi_2 p_3(a_3,m_2,g_2(m_1,a_1))\big)
\] 
which should be compared with
\[
(r\ast s)(c,p,q,a) = r\big(p,a,s(c,q,I(p,a))\big)
\]
recalling again that $I$ corresponds to $g_2$,
that $p$ and $q$ correspond to $m_1$ and $m_2$,
that $a$ and $c$ correspond to $a_1$ and $a_3$, and 
that $r$ and $s$ correspond to $\pi_2p_2$ and $\pi_2p_3$
respectively.

Finally the $P_1 \times P_2$ component has, as its two coordinates, 
\[
\pi_1 p_2\big(\pi_2 p_3 (a_3,m_2,g_2(m_1,a_1)),m_1,a_1\big)
\hspace{.8cm}
\mbox{and}
\hspace{1.9cm}
\pi_1 p_3\big(a_3,m_2,g_2(m_1,a_1)\big)
\]
which should be compared respectively with
\[
U\big(s(c,q,I(p,a)),p,a\big)
\hspace{2.5cm}
\mbox{and}
\hspace{2.5cm}
    V\big(c,q,I(p,a)\big),
\]
recalling all the correspondences we've already pointed out, along with the correspondences
between $U$ and $V$ and $\pi_1 p_2$ and $\pi_1 p_3$ respectively.

These functions are all the same up to the specified renaming correspondences,
and hence our functor preserves composition.
\end{proof}

\section{Discussion}\label{sec-Discussion}

In this section we discuss two directions of research suggested by the main
theorem: laws for well-behaved learners, and links between learners and
multiary lenses.

\subsection{Learner laws}

We have shown that the usual notion of composition of symmetric lenses admits
a generalization that receives a functor from $\lrn$. Instead, one might
consider adding conditions to the notion of learner that permit learners to
embed into a more familiar notion of symmetric lens. Put another way: the
lens laws suggest analogues for learners.

For example, the GetPut law generalises as follows.

\begin{definition}
  We say that a learner $(P,I,U,r)$ obeys the \define{I-UR law} if for every
  parameter $p \in P$ we have both $r(I(p,a),p,a) = a$ and $U(I(p,a),p,a) = p$, or in string diagrams
          \[
            \begin{aligned}
              \begin{tikzpicture}[oriented WD]
                \node[bb port sep=1.5, bb={3}{2}] (R)     {$U,r$};
                \node[bb port sep=1, bb={2}{1}, left=.4 of R_in1]  (I)     {$I$};
                \coordinate (n2) at (I_in1|-R_in2);
                \coordinate (n3) at (I_in1|-R_in3);
                \node[ibb={2}{2}, fit=(R) (I)]           (outer) {};
                \node at ($(outer_in1')-(0.3,0)$) {\footnotesize $P$};
                \node at ($(outer_in2')-(0.3,0)$) {\footnotesize $A$};
                \node at ($(outer_out1'|-R_out1)+(0.3,0)$) {\footnotesize $P$};
                \node at ($(outer_out2'|-R_out2)+(0.3,0)$) {\footnotesize $A$};
                \draw (outer_in1) to (I_in1);
                \draw (outer_in2) to (I_in2);
                \draw (outer_in1) to (n2);
                \draw (n2) to (R_in2);
                \draw (outer_in2) to (n3);
                \draw (n3) to (R_in3);
                \draw (I_out1) to (R_in1);
                \draw (R_out1) to (outer_out1|-R_out1);
                \draw (R_out2) to (outer_out2|-R_out2);
              \end{tikzpicture}
            \end{aligned}
            =
            \begin{aligned}
              \begin{tikzpicture}[oriented WD]
                \node                           (I)     {};
                \node[ibb={2}{2}, fit=(I)]                          (outer) {};
                \node at ($(outer_in1')-(0.3,0)$) {\footnotesize $P$};
                \node at ($(outer_in2')-(0.3,0)$) {\footnotesize $A$};
                \node at ($(outer_out1'|-outer_in1)+(0.3,0)$) {\footnotesize $P$};
                \node at ($(outer_out2'|-outer_in2)+(0.3,0)$) {\footnotesize $A$};
                \draw (outer_in1) to (outer_out1|-outer_in1);
                \draw (outer_in2) to (outer_out2|-outer_in2);
              \end{tikzpicture}
            \end{aligned}
          \]
\end{definition}

This law asks that the lens $(\langle U,r\rangle,I)$ obey the GetPut law.
Intuitively, it states that if, at a given parameter $p$, the training
pair $(a,b)$ provided is already classified correctly by the learner---that
is, if the training pair is of the form $(a,I(p,a))$---then the update function
does not change the parameter and the request function does not request any
change to the input. This sort of property can be a desirable property of
learning algorithm, and a number of simple, important examples of learners,
including those of Example~\ref{eg-EuclideanLearners}, satisfy the I-UR law.

We have focussed more on the PutGet law in this paper. This may be generalised
as follows.

\begin{definition}
  We say that a learner $(P,I,U,r)$ obeys the \define{UR-I law} if for every
  parameter $p \in P$ and input $a \in A$ we have $I(U(b,p,a),r(b,p,a)) = b$, or
  in string diagrams
          \[
            \begin{aligned}
              \begin{tikzpicture}[oriented WD]
                \node[bb port sep=1.5, bb={3}{2}] (R)     {$U,r$};
                \node[bb port sep=1.2, bb={2}{1}, right=.5 of R]  (I)     {$I$};
                \node[ibb={3}{1}, fit=(R) (I)]           (outer) {};
                \node at ($(outer_in1'|-R_in1)-(0.3,0)$) {\footnotesize $B$};
                \node at ($(outer_in2'|-R_in2)-(0.3,0)$) {\footnotesize $P$};
                \node at ($(outer_in3'|-R_in3)-(0.3,0)$) {\footnotesize $A$};
                \node at ($(outer_out1')+(0.3,0)$) {\footnotesize $B$};
                \draw (outer_in1|-R_in1) to (R_in1);
                \draw (outer_in2|-R_in2) to (R_in2);
                \draw (outer_in3|-R_in3) to (R_in3);
                \draw (R_out1|-I_in1) to (I_in1);
                \draw (R_out2|-I_in2) to (I_in2);
                \draw (I_out1) to (outer_out1);
              \end{tikzpicture}
            \end{aligned}
            =
            \begin{aligned}
              \begin{tikzpicture}[oriented WD]
                \node                           (I)     {};
                \node[ibb={3}{1}, fit=(I)]                          (outer) {};
                \node[circle, inner sep=1.5, fill] (n2) at ($(outer_in2)+(.5,0)$) {};
                \node[circle, inner sep=1.5, fill] (n3) at ($(outer_in3)+(.5,0)$) {};
                \node at ($(outer_in1')-(0.3,0)$) {\footnotesize $B$};
                \node at ($(outer_in2')-(0.3,0)$) {\footnotesize $P$};
                \node at ($(outer_in3')-(0.3,0)$) {\footnotesize $A$};
                \node at ($(outer_out1')+(0.3,0)$) {\footnotesize $B$};
                \draw (outer_in1) to (outer_out1);
                \draw (outer_in2) to (n2);
                \draw (outer_in3) to (n3);
              \end{tikzpicture}
            \end{aligned}
          \]
\end{definition}

This law asks that the lens $(\langle U,r\rangle,I)$ obey the PutGet law. The
intuition here is that when given a training pair $(a,b)$, if the requested
input $r(b,p,a)$ is given to the implementation function at the new parameter
$U(b,p,a)$, then the training pair $(r(b,p,a),b)$ will be correctly classified
by the learner. This is too strong for the incremental learning witnessed in
practical supervised learning algorithms such as neural networks. Nonetheless,
it is clear that a learner with this property would be in some sense desirable,
or well-behaved.

Indeed, learning algorithms in practice must take into account practical
considerations such as learning speed and convergence, and the prioritisation of
these considerations leads to methods that violate abstract properties such as
the I-UR and UR-I laws that might characterise what it means to learn
effectively. Nonetheless, we believe the formulation of these properties, from
well-motivated considerations such as our main theorem, suggest ideas that could
help frame and guide development of learning algorithms, especially should the
intent be to construct algorithms which can be reasoned about to some extent.

This view of learners obeying generalised lens laws suggests a view of a
learner as just a parametrised family of lenses, together with a rule for
choosing which lens in this family to use given some examples of what you
want the lens to do.

\subsection{Links with multiary lenses}

During the course of preparing this work for this workshop an interesting
similarity has come to light. In this paper the main tool we are using is
symmetric lenses with left leg constant complement, and right leg bare lenses.
In another paper presented at this workshop \cite{jrmml} that studies an
entirely different area, the main tool the authors use is symmetric lenses (in
fact, wide spans of lenses, so they may have in general more than two legs, but
they do have at least two legs) with left leg what is known as a closed
spg-lens, and right leg(s) arbritrary spg-lenses.

The similarity is more than just the linguistic parallel just described.
We'll not define spg-lenses or closed spg-lenses here, but we remark 
that constant complement lenses are indeed closed spg-lenses. In both cases
composition along those left legs is important, and the fact that they are,
in both cases, closed and satisfy PutGet is what is important for the
composition to work. The nature of the left legs is critical for the main
idea in both papers.

What of the right legs? At first they seem very different.  In this paper the right legs are bare lenses
--- they have a Put $p$ and Get $g$ and nothing else, neither more structure, nor axioms.
In apparent contrast, the right legs in \cite{jrmml} seem to have a substantial
amount of structure.  They do have a $p$ and a $g$, but they also have something
called an \emph{amendment} and several axioms.  The basic idea is that an
update, expressed there as an arrow $v$ in a category (here we only have the
codomain of such an arrow when we are doing an update because these are
set-based lenses), might result in not only a modification of the other
component (or in the case of the multiary lenses of \cite{jrmml}, the other
components), but also an \emph{amendment}. This amendment $a$ can be composed
with $v$ so that while the Get of the Put might not be $v$, it will be $av$.
In other words the amendment \emph{repairs} PutGet.

Now are the right hand lenses really that different?  There is a standard way of
seeing set based lenses as so called delta lenses (lenses that take arrows, not
just codomain objects, as the input for Put).  It appears as part of a unified
treatment of many different kinds of lenses \cite{jrusbdbebl} and involves
co-discrete categories.  In a codiscrete category there is a unique way of
extending each of the view updates from the bare lenses of this paper to make
them line up with their own Put, in other words a unique way of extending bare
lenses that satisfy no axioms to spg-lenses that satisfy PutGet.
So, the two ``main tools'' are actually remarkably similar.

They still differ in one respect (only):  spg-lenses are required to satisfy an
axiom that corresponds to GetPut here.  But we have just discussed how GetPut is
in fact a desirable property that might be asked of Learners.  If it were, then
the two very different projects are in fact using exactly alignable, novel,
tools: Closed amendment lenses (of the constant complement variety here) as left
leg and spg amendment lenses as right leg(s).

The similarities and what they might mean (if anything) will be considered further in future work.

\section{Conclusion}

To summarise, in this paper we have described a faithful, identity-on-objects,
symmetric monoidal functor from a category which captures the notion of
composable supervised learning algorithms to a suitable category of lenses.  To
do this, we presented a slight generalisation of the usual notion of symmetric
lens, in which we require a very weak form of well behavedness: a span of
asymmetric lenses in which the left leg satisfies the PutGet law. Despite the
general definition, these symmetric lenses still compose, and indeed equivalence
classes of them form the morphisms of a symmetric monoidal category $\slens$.
Our main theorem describes the aforementioned close, functorial relationship
between the category of learners, as defined in \cite{FST17}, with this category
of lenses.  In this theorem, we witness a surprising yet highly robust link
between two previously unrelated fields. We believe, as hinted by our brief
discussion in Section~\ref{sec-Discussion}, this to be a rich connection
deserving of further exploration.

\subsubsection*{Acknowledgements}

This work has been supported by the Australian Research Council and USA AFOSR
grants FA9550-14-1-0031, FA9550-17-1-0058. BF thanks Jules Hedges for first
bringing the contents of Remark~\ref{rem-triviallearner} to his attention.

\end{document}